\tikzstyle{block} = [draw, rectangle, minimum height=3em, minimum width=3em]
\tikzstyle{sum} = [draw, circle, node distance=1.5cm]
\tikzstyle{input} = [coordinate]
\tikzstyle{output} = [coordinate]
\newcommand{\state}{\mu}
\newcommand{\pre}[1]{\rho^{{(#1)}}}
\newcommand{\ie}{\emph{i.e.}}
\renewcommand{\leq}{\leqslant}
\renewcommand{\geq}{\geqslant}
\renewcommand{\epsilon}{\varepsilon}
\renewcommand{\phi}{\varphi}
\newtheorem{thm}{Theorem}
\newtheorem{remark}{Remark}
\title{\vspace{-1.3cm} Slow Convergence of  Interacting Kalman Filters in Word-of-Mouth Social Learning}
\author{Vikram Krishnamurthy, \and Cristian R. Rojas
\thanks{Vikram Krishnamurthy is with the  School of Electrical and Computer Engineering, Cornell University.
    Email: vikramk@cornell.edu. \\  Cristian Rojas is with the  Division of Decision and Control Systems, School of Electrical Engineering and Computer Science, KTH Royal Institute of Technology, Stockholm, Sweden.  Email: crro@kth.se  \\  Krishnamurthy's research was supported by the U.S.\ Army Research Office under grant
    W911NF-24-1-0083,  and National Science Foundation under grants CCF-2112457 and  CCF-2112457.
Rojas' research  was supported by the Swedish Research Council research environment NewLEADS, contract 2016-06079;
and Wallenberg AI, Autonomous Systems and Software Program (WASP), funded by Knut and Alice Wallenberg Foundation.}
}
\date{}
\begin{document}
\maketitle
\thispagestyle{empty}


\begin{abstract}
We consider word-of-mouth social learning involving $m$ Kalman filter agents that operate sequentially. The first Kalman filter receives the raw observations, while each subsequent  Kalman filter receives a noisy measurement of the conditional mean of the previous Kalman filter. The prior is updated by the $m$-th Kalman filter. When $m=2$, and the observations are noisy measurements of a Gaussian random variable,  the covariance goes to zero as $k^{-1/3}$ for $k$ observations, instead of  $O(k^{-1})$  in the standard Kalman filter. In this paper we prove that for $m$ agents, the covariance decreases to zero as $k^{-(2^m-1)}$, i.e, the learning slows down exponentially with the number of agents. We also show that by artificially weighing the prior at each time, the learning rate can be made optimal as $k^{-1}$. The implication is that in word-of-mouth social learning, artificially re-weighing the prior can yield the optimal  learning rate. 
\end{abstract}

\section{Introduction and Problem Formulation}

 {Social learning}  serves as a useful  mathematical abstraction for modeling  the interaction of social  sensors \cite{AO11,Ban92,BHW92,BMS20,Cha04,Kri16,KH15,KP14,Say14b,WD16}
It is  an integral part of human behavior and is studied widely in economics, marketing, political science and sociology (where the term groupthink is used), to model the behavior of financial markets, 
 social groups and social networks. Related models have been studied in  sequential decision making in
 computer science  and signal processing.
 In social learning,
 agents estimate the underlying state not only from their local noisy measurements, but also from the actions of previous
 agents.
 The agents are Bayesian; so estimating the state given the previous agent's action involves using Bayes rule to infer about Bayes rule. This double Bayesian inference results in unusual behavior.
 For example, when the state, observation and action spaces of individual agents are finite, then it can be shown under quite general conditions that the agents form an information cascade, namely, after a finite amount of time, agents blindly repeat the actions of previous agents regardless of their private observations.

 This paper analyzes social learning among interacting agents that observe the actions of previous agents in Gaussian noise. The agents aim to estimate an underlying Gaussian state.   The agents action is its state estimate. Each agent deploys a Kalman filter (albeit degenerate since the state is a random variable). Instead of information cascades, we show that social learning results in a significant slow down in the learning process.

 The model studied in this paper   is motivated by   learning by doing (in economics) involving  team decisions, and also word of mouth decision making \cite{Viv93,Viv97}. The book \cite{Cha04} provides a detailed exposition of social learning models. The asymmetric information structure in the model considered in this paper also arises in adversarial signal processing applications involving counter-autonomous systems
 \cite{KR19}.

\subsection{Problem Formulation}

Consider the following problem of social learning: Agent~$1$, at each time instant $k \in \mathbb{N}$, receives a signal $s_k = \theta + e_k$, where $\theta \sim \mathcal{N}(\bar{\theta}, \lambda_0)$ and $e_k \sim \mathcal{N}(0, \lambda_e)$ (independent of $\theta$). The posterior distribution of $\theta$ given $s_1, \dots, s_k$ is Gaussian with mean $\mu_k^{(1)}$ and precision $\rho_k^{(1)}$ (the precision is the inverse of the covariance). Therefore, based on Theorem~\ref{thm:cond_Gauss} from Appendix~1, these quantities are updated as follows (where $\rho_0^{(1)} := 1 / \lambda_0$ and $\rho_e := 1 / \lambda_e$):
\begin{align*}
\rho_k^{(1)} &= \rho_{k-1}^{(1)} + \rho_e \\
\mu_k^{(1)} &= (1 - \alpha_k^{(1)}) \mu_{k-1}^{(1)} + \alpha_k^{(1)} s_k, \qquad \alpha_k^{(1)} = \rho_e / \rho_k^{(1)}.
\end{align*}
Agent~$1$ transmits a noisy version of $\mu_k^{(1)}$, say, $\tilde{\mu}_k^{(1)} = \mu_k^{(1)} + w_k^{(1)}$, with $w_k^{(1)} \sim \mathcal{N}(0, \lambda_w^{(1)})$ (independent of $\theta$ and $e$) to Agent~$2$; note that since $\rho_k^{(1)}$ does not depend on the measurements, it can be assumed to be known to Agent~$2$.
In a more general setup, we can consider $m > 1$ agents, where Agent~$i$ ($i = 1, \dots, m - 1$) transmits a noisy version of its posterior $\mu_k^{(i)}$, say, $\tilde{\mu}_k^{(i)} = \mu_k^{(i)} + w_k^{(i)}$, with $w_k^{(i)} \sim \mathcal{N}(0, \lambda_w^{(i)})$ (independent of all previous random variables) to Agent~$i+1$.

\smallskip
In a word-of-mouth social learning problem, we assume that Agents $1, \dots, m-1$ rely on a ``publicly known'' posterior $(\mu_{k-1}^{(m)}, \rho_{k-1}^{(m)})$, which is generated by the last ($m$-th) agent. This is illustrated  in Figure~\ref{fig:kalman} for the case $m=3$ agents.

\begin{figure*}[h] \centering
 \scalebox{1}{
\begin{tikzpicture}[node distance=1.25cm and 2cm]
    \node[draw, thick, rectangle,align=center] (box1) {Kalman \\  Filter 1};
    \node[draw, thick, rectangle, right=of box1, align=center] (box2) {Kalman \\ Filter 2};
    \node[draw, thick, rectangle, right=of box2, align=center] (box2a) {Kalman \\ Filter 3};
    \node[draw, thick,  rectangle, above=of box1] (box3) {Prior};
    
    \draw[Latex-] (box1.west) -- ++(-0.5,0) node[left] {$s_k$};
    \draw[-Latex] (box1) -- (box2) node[midway, above]{{\small $\hat{\state}_k^{(1)}+w^{(1)}_k$}};
    \draw[-Latex] (box2) -- (box2a) node[midway, above]{{\small $\hat{\state}_k^{(2)}+w^{(2)}_k$}};
    \draw[-Latex] (box3) -- (box1) node[midway, right] {};
    \draw[-Latex] (box2a.east) -- ++(2,0) node[pos=0.5, above] {{\small $\hat{\state}_k^{(3)},\pre3_k$}} |-  (box3.east) ;
    \draw[-Latex] (box2a.east) -- ++(2,0) ;
    
    \draw[-Latex] (box3.south) -- ++(0,-0.5)  -|  (box2.north)  ;
     \draw[-Latex] (box3.south) -- ++(0,-0.5)  -|  (box2a.north)  ;
   \end{tikzpicture}}
 \caption{Three Interacting Kalman Filters} \label{fig:kalman}
\end{figure*}
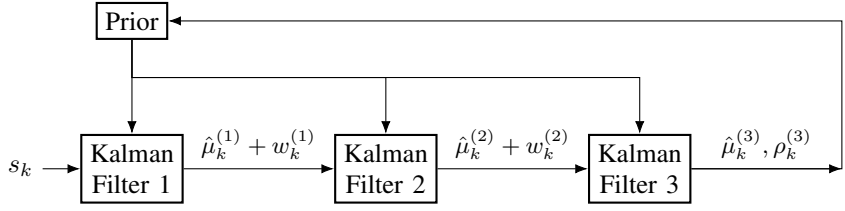

There is information asymmetry in the model: agent 1 receives the external observation $s_k$, while agent $m$ specifies the prior. Apart from the first agent, the remaining $m-1$  agents only receive word-of-mouth estimates from previous agents; specifically they do not obtain the observation $s_k$.

The aim of this paper is to characterize how the precision $\pre{m}_k$ evolves with time $k$. The main result
Theorem~\ref{thm:rho} shows that $\rho_k^{(m)} = O({k^{\sfrac{1}{(2^m - 1)}}})$ when there are $m$-interacting agents. Then Theorem~\ref{thm:rho_scaled} shows that we can recover the optimal convergence rate of $1/k$ by artificially scaling the prior covariance of all but the $m$-th agent.  


\section{Analysis}

In this section we analyze the word-of-mouth problem, as formulated in the previous section.
For help guide the reader's intuition,  we first start with $m=2$ agents (which was studied in \cite{Viv93,Viv97}). Then we discuss
$m=3$ agents. Finally we discuss the general case of $m$ agents. Throughout this paper, we consider scalar observations, and assume that unobserved state $\theta$ is a scalar Gaussian  random variable.

\subsection*{Two agents}
Consider first the case of two agents, \ie, $m=2$. Then, Agent~$2$ has access to a public posterior $(\mu_{k-1}^{(1)}, \rho_{k-1}^{(1)})$, then
\begin{align*}
\tilde{\mu}_k^{(1)}
&= (1 - \alpha_k^{(1)}) \mu_{k-1}^{(1)} + \alpha_k^{(1)} s_k + w_k^{(1)}\\
&= (1 - \alpha_k^{(1)}) \mu_{k-1}^{(1)} + \alpha_k^{(1)} \theta + \alpha_k^{(1)} e_k + w_k^{(1)},
\end{align*}
or
\begin{align*}
\frac{\tilde{\mu}_k^{(1)} - (1 - \alpha_k^{(1)}) \mu_{k-1}^{(1)}}{\alpha_k^{(1)}}
=: z_k^{(1)}
= \theta + e_k + \frac{w_k^{(1)}}{\alpha_k^{(1)}}.
\end{align*}
Observe that $z_k^{(1)}$ is \emph{informationally equivalent} to $\tilde{\mu}_k^{(1)}$. The equivalent ``noise'' is $n_k^{(1)} := e_k + \frac{1}{\alpha_k^{(1)}} w_k^{(1)}$, with variance $\lambda_e + \lambda_w^{(1)} / (\alpha_k^{(1)})^2$. Therefore, the posterior is updated by Agent~$2$ as follows:
\begin{align*}
\rho_k^{(2)}
&= \rho_{k-1}^{(2)} + \frac{1}{\lambda_e + \lambda_w^{(1)} / (\alpha_k^{(1)})^2} \\
&= \rho_{k-1}^{(2)} + \frac{1}{\lambda_e + \lambda_w^{(1)} \left(1 + \lambda_e \rho_{k-1}^{(1)}\right)^2}
\\
\mu_k^{(2)}
&= (1 - \alpha_k^{(2)}) \mu_{k-1}^{(2)} + \alpha_k^{(2)} z_k^{(1)}, \qquad \alpha_k^{(2)} \\ &= \frac{1}{\rho_k^{(2)} [\lambda_e + \lambda_w^{(1)} / (\alpha_k^{(1)})^2]}.
\end{align*}
If then the posterior of Agent~$2$ is publicly released, so that Agent~$1$ uses it instead of its own, that is, $\rho_{k-1}^{(1)}$ becomes $\rho_{k-1}^{(2)}$, the posterior precision of $\theta$, given data up to time $k$, satisfies the equation
\begin{align*}
\rho_k^{(2)}
= \rho_{k-1}^{(2)} + \frac{1}{\lambda_e + \lambda_w^{(1)} \left(1 + \lambda_e \rho_{k-1}^{(2)}\right)^2}.
\end{align*}
This setup, for two agents, is equivalent to the one in \cite[Section~3.3.1]{Cha04}, where it is shown that   $\rho_k$ does not increase linearly with $k$, but at the much slower rate $k^{1/3}$; the result was originally shown in \cite{Viv93,Viv97}.

\subsection*{Three agents}
Assume now that we have a third agent, who takes a noisy version of $\mu_k^{(2)}$, say, $\tilde{\mu}_k^{(2)} = \mu_k^{(2)} + w_k^{(2)}$, with $w_k^{(2)} \sim \mathcal{N}(0, \lambda_w^{(2)})$ (independent of all previously defined random variables); again, since $\rho_k^{(2)}$ does not depend on the measurements, it can be assumed to be known to Agent~$3$. If Agent~$3$ also has access to the public posterior $(\mu_{k-1}^{(1)}, \rho_{k-1}^{(1)})$, then
\begin{align*}
\tilde{\mu}_k^{(2)}
&= (1 - \alpha_k^{(2)}) \mu_{k-1}^{(1)} + \alpha_k^{(2)} z_k^{(1)} + w_k^{(2)}
\\ &= (1 - \alpha_k^{(2)}) \mu_{k-1}^{(1)} + \alpha_k^{(2)} \theta + \alpha_k^{(2)} n_k^{(1)} + w_k^{(2)},
\end{align*}
or
\begin{align*}
\frac{\tilde{\mu}_k^{(2)} - (1 - \alpha_k^{(2)}) \mu_{k-1}^{(1)}}{\alpha_k^{(2)}}
=: z_k^{(2)} = \theta + n_k^{(1)} + \frac{w_k^{(2)}}{\alpha_k^{(2)}}.
\end{align*}
The equivalent noise is now $n_k^{(2)} = n_k^{(1)} + \frac{w_k^{(2)}}{\alpha_k^{(2)}} = e_k + \frac{1}{\alpha_k^{(1)}} w_k^{(1)} + \frac{1}{\alpha_k^{(2)}} w_k^{(2)}$, which is a zero mean Gaussian variable with variance $\lambda_e + \lambda_w^{(1)} / (\alpha_k^{(1)})^2 + \lambda_w^{(2)} / (\alpha_k^{(2)})^2$. Then, the posterior is updated by Agent~$3$ as
\begin{align*}
\rho_k^{(3)}
&= \rho_{k-1}^{(3)} + \frac{1}{\lambda_e + \lambda_w^{(1)} / (\alpha_k^{(1)})^2 + \lambda_w^{(2)} / (\alpha_k^{(2)})^2} \\
&= \rho_{k-1}^{(3)} +
  \frac{1}{D_3}
\\
 D_3 &=  \lambda_e + \lambda_w^{(1)} \left(1 + \lambda_e \rho_{k-1}^{(1)}\right)^2  \\ & + \lambda_w^{(2)} \left( 1 + \rho_{k-1}^{(2)} \left[\lambda_e + \lambda_w^{(1)} \left(1 + \lambda_e \rho_{k-1}^{(1)}\right)^2\right] \right)^2
\\
\mu_k^{(3)}
&= (1 - \alpha_k^{(3)}) \mu_{k-1}^{(3)} + \alpha_k^{(3)} z_k^{(2)}, \\  \alpha_k^{(3)} &= \frac{1}{\rho_k^{(3)}[\lambda_e + \lambda_w^{(1)} / (\alpha_k^{(1)})^2 + \lambda_w^{(2)} / (\alpha_k^{(2)})^2]}.
\end{align*}
If now the posterior of Agent~$3$ is publicly released, so that Agents~$1$ and $2$ use it instead of their own, that is, $\rho_{k-1}^{(1)}$ and $\rho_{k-1}^{(2}$ are replaced by $\rho_{k-1}^{(3)}$, then the posterior precision of $\theta$, according to Agent~$3$, given data up to time $k$, satisfies the equation (where $D_3$ is defined above)
\begin{align*}
\rho_k^{(3)}
= \rho_{k-1}^{(3)} +
\frac{1}{D_3}
\end{align*}
To analyze the asymptotic behavior of $\rho_k^{(3)}$, at least from a heuristic perspective, we can assume that, for large $k$, $\rho_k^{(3)} \approx C k^\beta$; we will allow $C$ to take different values throughout the derivation. Then, the equation for the precision can be approximated as
\begin{align*}
C k^\beta
\approx C \sum_{i=1}^k
 \frac{1}{ i^{6 \beta}}
 \approx C k^{1 - 6 \beta}.
\end{align*}
Therefore, matching the exponents yields $\beta = 1 - 6 \beta$, or $\beta = 1/7$, so $\rho_k^{(3)}$ grows as $k^{1/7}$, even slower than when having only two agents.

\subsection*{m-agents}
If we now concatenate $m$ agents, where the $m$-th agent sets the public posterior, it is apparent that
\begin{align*}
\rho_{k-1}^{(j)} &= \rho_{k-1}^{(m)}, \quad j = 1, \dots, m-1, \\
n_k^{(j)} &= e_k + \sum_{i=1}^{j} \frac{1}{\alpha_k^{(i)}} w_k^{(i)} \sim \mathcal{N}\left(0, \lambda_e + \sum_{i=1}^{j} \frac{\lambda_w^{(i)}}{(\alpha_k^{(i)})^2}\right), \\ &  \qquad \qquad j = 1, \dots, m-1, \\
\rho_k^{(j)} &= \rho_{k-1}^{(j)} + \frac{1}{\lambda_e + \sum_{i=1}^{j-1} \lambda_w^{(i)} / (\alpha_k^{(i)})^2}, \quad j = 2, \dots, m, \\
\alpha_k^{(j)} &= \frac{1}{\rho_k^{(j)} \left[\lambda_e + \sum_{i=1}^{j-1} \lambda_w^{(i)} / (\alpha_k^{(i)})^2 \right]}, \quad j = 1, \dots, m.
\end{align*}
By defining $\gamma_k^{(j)} := \lambda_w^{(j)} / (\alpha_k^{(j)})^2$ as the new variance contribution to the equivalent noise entering agent $j+1$, we can write the equations for the precision as
\begin{align}
\rho_k^{(j)} &= \rho_{k-1}^{(m)} + \frac{1}{\lambda_e + \sum_{i=1}^{j-1} \gamma_k^{(i)}}, \quad j = 2, \dots, m, \label{eq:iterations1} \\
\gamma_k^{(j)} &= \lambda_w^{(j)} (\rho_k^{(j)})^2 \left[\lambda_e + \sum_{i=1}^{j-1} \gamma_k^{(i)} \right]^2 \nonumber\\ & \hspace{-1cm} = \lambda_w^{(j)} \left[1 + \rho_{k-1}^{(m)} \left(\lambda_e + \sum_{i=1}^{j-1} \gamma_k^{(i)}\right) \right]^2, \, j = 1, \dots, m-1. \label{eq:iterations2}
\end{align}
To get some intuition regarding the asymptotic behavior of $\rho_k^{(m)}$, assume that $\rho_k^{(m)} \approx C k^\beta$. Then,
\begin{align*}
\gamma_k^{(1)} &\approx C (\rho_k^{(m)})^2 = C k^{2 \beta}, \\
\gamma_k^{(2)} &\approx C (\rho_k^{(m)})^2 (\gamma_k^{(1)})^2  = C k^{(2 + 2 \cdot 2) \beta} = C k^{6\beta}, \\
\gamma_k^{(3)} &\approx C (\rho_k^{(m)})^2 (\gamma_k^{(1)})^2  = C k^{(2 + 2 \cdot 6) \beta} = C k^{14 \beta},
\end{align*}
and by induction on $j$ it follows that $\gamma_k^{(j)} \approx C k^{(2 \cdot 2^j - 2) \beta}$. Therefore, the equation for $\rho_k^{(m)}$ implies that $C k^{\beta} \approx \rho_k^{(m)} \approx C \sum_{i=1}^k i^{(2-2^m) \beta} = C k^{1 + (2-2^m) \beta}$. Matching exponents, this gives $\beta = 1 + (2-2^m) \beta$, or $\beta = 1 / (2^m - 1)$. In other words, for $m$ agents,
\begin{align*}
\rho_k^{(m)} \approx C k^{\sfrac{1}{(2^m - 1)}}.
\end{align*}
The following theorem formalizes this result:

\begin{thm} \label{thm:rho}
In the word-of-mouth social learning problem, the public belief has a precision $\rho_k^{(m)}$ satisfying
\begin{multline} \label{eq:rho_Riccati}
  \rho_k^{(m)} = \rho_{k-1}^{(m)} +  \frac{1}{D_m} \\
D_m = 
  \lambda_e^{2^{m-1}} \left[ \prod_{i=1}^{m-1} [\lambda_w^{(i)}]^{2^{m-1-i}} \right] \left(\rho_{k-1}^{(m)}\right)^{2^m - 2} \\  + f\left(\left(\rho_{k-1}^{(m)}\right)^{2^m - 3}\right),
\end{multline}
where $f\colon \mathbb{R}_+ \to \mathbb{R}_+$ is strictly increasing and such that there exists an $M > 0$ for which $\lambda_e \leq f(x) \leq M x$ for all $x > 0$. Furthermore,
\begin{align} \label{eq:asymptotic_rho}
\lim_{k \to \infty} \frac{\rho_k^{(m)}}{k^{\sfrac{1}{(2^m - 1)}}}
= \left[ \frac{2^m - 1}{\lambda_e^{2^{m-1}} \prod_{i=1}^{m-1} [\lambda_w^{(i)}]^{2^{m-1-i}}} \right]^{\sfrac{1}{(2^m - 1)}}.
\end{align}
\end{thm}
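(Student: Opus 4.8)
The plan is to prove the two assertions of the theorem in turn: the closed form \eqref{eq:rho_Riccati} together with the structural description of $D_m$, and then the limit \eqref{eq:asymptotic_rho}. Throughout, abbreviate $\rho := \rho_{k-1}^{(m)}$ and introduce the equivalent-noise variance seen by agent $j$,
\[
S^{(j)} := \lambda_e + \sum_{i=1}^{j-1} \gamma_k^{(i)},
\]
so that $S^{(1)} = \lambda_e$ and, reading off \eqref{eq:iterations1}--\eqref{eq:iterations2}, we have $D_m = S^{(m)}$, $\gamma_k^{(j)} = \lambda_w^{(j)}\,(1 + \rho\,S^{(j)})^2$, and $S^{(j+1)} = S^{(j)} + \gamma_k^{(j)}$. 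The whole first claim thus reduces to understanding $S^{(m)}$ as a function of the single variable $\rho$.

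First I would show, by induction on $j$, that $S^{(j)}$ is a polynomial in $\rho$ with nonnegative coefficients, strictly increasing on $\mathbb{R}_+$, of degree $2^j - 2$ and with leading coefficient $c_j$, where $c_1 = \lambda_e$ and $c_{j+1} = \lambda_w^{(j)}\,c_j^2$. The base case is immediate. For the step, $1 + \rho\,S^{(j)}$ is a polynomial with positive coefficients of degree $2^j - 1$; squaring it and multiplying by $\lambda_w^{(j)} > 0$ shows $\gamma_k^{(j)}$ has nonnegative coefficients, degree $2(2^j - 1) = 2^{j+1} - 2$, and leading coefficient $\lambda_w^{(j)} c_j^2$, and adding $S^{(j)}$ (of strictly smaller degree) yields the claim for $S^{(j+1)}$; strict monotonicity is preserved at each operation. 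Solving $c_{j+1} = \lambda_w^{(j)} c_j^2$ from $c_1 = \lambda_e$ — most cleanly by taking logarithms — gives $c_m = \lambda_e^{2^{m-1}} \prod_{i=1}^{m-1} [\lambda_w^{(i)}]^{2^{m-1-i}}$, precisely the coefficient appearing in \eqref{eq:rho_Riccati}. Setting $f := D_m - c_m\,\rho^{2^m-2}$, the remainder is then a polynomial with nonnegative coefficients of degree at most $2^m - 3$, whose value at $\rho = 0$ equals $\lambda_e + \sum_{i=1}^{m-1}\lambda_w^{(i)} \geq \lambda_e$; hence $f$ is strictly increasing, bounded below by $\lambda_e$, and $O(\rho^{2^m-3})$. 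Rewriting it through the monotone substitution $x = \rho^{2^m-3}$ produces the function $f$ of the statement, the upper bound being used only in its asymptotic $f(x) = O(x)$ form.

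For the limit \eqref{eq:asymptotic_rho}, set $p := 2^m - 1$ and $u_k := (\rho_k^{(m)})^{p}$. Because $1/D_m > 0$, the sequence $\rho_k^{(m)}$ is strictly increasing, and it cannot be bounded (a bounded sequence would keep $D_m$ bounded, hence the increments $1/D_m$ bounded away from $0$, a contradiction); so $\rho_k^{(m)} \to \infty$, and the lower bound $D_m \geq c_m\,(\rho_{k-1}^{(m)})^{2^m-2} \to \infty$ forces $\delta_k := 1/D_m \to 0$. Applying the mean value theorem to $t \mapsto t^{p}$ gives $u_k - u_{k-1} = p\,\xi_k^{\,p-1}\,\delta_k$ with $\xi_k \in (\rho_{k-1}^{(m)}, \rho_{k-1}^{(m)} + \delta_k)$; since $\delta_k \to 0$ and $\rho_{k-1}^{(m)} \to \infty$ we have $\xi_k / \rho_{k-1}^{(m)} \to 1$, so
\[
u_k - u_{k-1} = p\,\frac{(\rho_{k-1}^{(m)})^{2^m-2}}{D_m}\,(1 + o(1)) \;\longrightarrow\; \frac{p}{c_m},
\]
where convergence of the fraction to $1/c_m$ is exactly the relation $D_m \sim c_m\,(\rho_{k-1}^{(m)})^{2^m-2}$ proved above. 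By the Stolz--Ces\`aro theorem the running averages of the increments converge to the same limit, i.e.\ $u_k/k \to p/c_m$, whence $\rho_k^{(m)} = u_k^{1/p} \sim (p\,k/c_m)^{1/p}$. Dividing by $k^{1/p} = k^{1/(2^m-1)}$ and substituting $c_m$ recovers \eqref{eq:asymptotic_rho}.

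The main obstacle is twofold. The delicate bookkeeping lies in the induction: one must carry not merely the degree of each $S^{(j)}$ but its exact leading coefficient through the nonlinear recursion $c_{j+1} = \lambda_w^{(j)} c_j^2$, since any slip there corrupts the constant in \eqref{eq:asymptotic_rho}. The second, more analytic, point is ensuring the $o(1)$ term in the increment estimate is genuinely negligible: this requires the quantitative remainder bound $f = O((\rho_{k-1}^{(m)})^{2^m-3})$ from the first part, which is what upgrades $(\rho_{k-1}^{(m)})^{2^m-2}/D_m$ from merely bounded to convergent with the exact limit $1/c_m$. Once these two ingredients are secured, monotonicity, divergence of $\rho_k^{(m)}$, and the Stolz--Ces\`aro step are routine.
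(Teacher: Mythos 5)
Your proof of \eqref{eq:rho_Riccati} is essentially the paper's: the paper runs the same induction on leading coefficients, tracking $\gamma_k^{(j)}$ directly (degree $2^{j+1}-2$, leading coefficient $a_j = \lambda_e^{2^j}\prod_{i=1}^j[\lambda_w^{(i)}]^{2^{j-i}}$) rather than your partial sums $S^{(j)}$, but the bookkeeping is identical and your $c_m$ coincides with the paper's $a_{m-1}$. Where you genuinely diverge is in establishing \eqref{eq:asymptotic_rho}. The paper delegates this to a standalone result (Theorem~\ref{thm:riccati} of Appendix~2, applied with $\delta=0$): substitute $X_k = Y_k k^{1/(N+1)}$, prove $(Y_k)$ is bounded via two thresholds $K_1,K_2$, extract convergent subsequences, and show every subsequential limit solves $0 = -\bar Y/(N+1) + 1/(C\bar Y^N)$. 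You instead set $u_k = (\rho_k^{(m)})^{2^m-1}$, use the mean value theorem to show the increments $u_k - u_{k-1}$ converge to $(2^m-1)/c_m$, and conclude by Stolz--Ces\`aro. Your route is correct and, for this theorem, shorter and more transparent: with no scaling the leading monomial of $D_m$ dominates, the power substitution exactly linearizes the recursion, and only the asymptotic bound $f(x)=O(x)$ is needed, so the subsequence argument is avoided entirely. What the paper's heavier machinery buys is reuse: the same Appendix-2 theorem also covers the scaled recursion of Theorem~\ref{thm:rho_scaled}, including the boundary case $\delta=1$, where the lower-order term $f$ survives in the limiting equation $f(\bar Y)=1/\bar Y - C\bar Y^N$ and no single power of $\rho_k^{(m)}$ linearizes the increments --- there your MVT/Ces\`aro argument would not carry over unchanged. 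One shared cosmetic blemish, present in the paper as well: the bound $f(x)\leq Mx$ cannot hold for all $x>0$, since the remainder polynomial has the positive constant term $\lambda_e+\sum_{i=1}^{m-1}\lambda_w^{(i)}$; it holds for $x$ bounded away from zero, which is all that is used because $\rho_k^{(m)}\geq\rho_0^{(m)}>0$ and is increasing.
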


\begin{remark}
Slow learning occurs regardless of the noise variance  that corrupts the estimates of the Kalman filters. Of course, if the noise variance is zero, then the precision  grows as $O(k)$, consistent with the classical Kalman filter. Thus, the model where agents  exactly know the estimate of the previous agent, is not robust;  even small errors cause the learning rate (precision)  to slow from $O(k)$ to $O(k^{\sfrac{1}{(2^m - 1)}})$.
\end{remark}

\begin{proof}
We will first prove, by induction on $j = 1, \dots, m$, that $\gamma_k^{(j)}$ is a polynomial of order $2^{j+1} - 2$ in $\rho_{k-1}^{(m)}$, with leading coefficient $a_j = \lambda_e^{2^j} \prod_{i=1}^j [\lambda_w^{(i)}]^{2^{j-i}}$ and all remaining coefficients being positive.
For $j = 1$, $\gamma_k^{(1)}$ is indeed a polynomial of order $2$ in $\rho_{k-1}^{(m)}$, with leading coefficient $a_1 = \lambda_w^{(1)} \lambda_e^2$ and all remaining coefficients being positive. Assume now that the statement holds for $j = 1, \dots, p$ ($1 \leq p < k - 1$); then, \eqref{eq:iterations2} implies that $\gamma_k^{(p+1)}$ is also a polynomial in $\rho_{k-1}^{(m)}$ of order $2 (1 + 2^{p+1} - 2) = 2^{p+2} - 2$, leading coefficient
\begin{align*}
a_{p+1}
&= \lambda_w^{(p+1)} a_p^2
= \lambda_w^{(p+1)} \lambda_e^{2 \cdot 2^p} \prod_{i=1}^p [\lambda_w^{(i)}]^{2 \cdot (2^{p-i})}
\\ &= \lambda_e^{2^{p + 1}} \lambda_w^{(p+1)} \prod_{i=1}^p [\lambda_w^{(i)}]^{2^{p+1-i}}
= \lambda_e^{2^{p + 1}} \prod_{i=1}^{p+1} [\lambda_w^{(i)}]^{2^{p+1-i}},
\end{align*}
and all remaining coefficients being positive.
Plugging this result into \eqref{eq:iterations1} yields \eqref{eq:rho_Riccati}, where $f\left(\left(\rho_{k-1}^{(m)}\right)^{2^m - 3}\right)$ is a polynomial in $\rho_{k-1}^{(m)}$ of degree strictly less than $2^m - 2$ with positive coefficients, hence it is monotonically increasing in $\rho_{k-1}^{(m)}$, or, equivalently, in $\left(\rho_{k-1}^{(m)}\right)^{2^m - 3}$. Furthermore, this implies, together with \eqref{eq:iterations1}, that there exists an $M > 0$ such that $\lambda_e \leq f(x) \leq M x$ for all $x > 0$, thus proving \eqref{eq:rho_Riccati}.

Eq.~\eqref{eq:asymptotic_rho} follows from Theorem~\ref{thm:riccati} of Appendix~2 (with $\delta = 0$).
\end{proof}

As Theorem~\ref{thm:rho} states, the variance of the public belief of the agents, corresponding to $1 / \rho_k^{(m)}$, decays to zero as $k \to \infty$, as expected. However, the decay rate is $k^{-\sfrac{1}{(2^m - 1)}}$, which is much  slower compared to the optimal rate, $k^{-1}$, for any $m \geq 2$. The learning rate slows exponentially with the number of agents $m$.


\section{Recovering the optimal convergence rate}

One approach to recover the optimal rate $1/k$ of the posterior public variance is to scale the public precision $\rho_{k-1}^{(m)}$ by a power of time, $k^{-\delta}$ (for some $\delta \in [0, 1]$), before being used by all agents \emph{except for the last one}. In this case, Eqn.~\eqref{eq:iterations2} is replaced by
\begin{align} \label{eq:iterations_scaled}
\gamma_k^{(j)} &= \lambda_w^{(j)} (k^{-\delta} \rho_k^{(j)})^2 \left[\lambda_e + \sum_{i=1}^{j-1} \gamma_k^{(i)} \right]^2 \\ &= \lambda_w^{(j)} \left[1 + k^{-\delta} \rho_{k-1}^{(m)} \left(\lambda_e + \sum_{i=1}^{j-1} \gamma_k^{(i)}\right) \right]^2,\nonumber \\ & \qquad \qquad  j = 1, \dots, m - 1.  \nonumber
\end{align}
The next theorem is a modification of Theorem~\ref{thm:rho} which accounts for the scaling of the precision.

\begin{thm} \label{thm:rho_scaled}
In the word-of-mouth social learning problem with scaling of the precision, the public belief has a precision $\rho_k^{(m)}$ satisfying
\begin{align} \label{eq:rho_Riccati_scaled}
  \rho_k^{(m)} &= \rho_{k-1}^{(m)} + \frac{1}{T_0} \\
  T_0 &= \lambda_e^{2^{m-1}} \left[ \prod_{i=1}^{m-1} [\lambda_w^{(i)}]^{2^{m-1-i}} \right] \left(k^{-\delta} \rho_{k-1}^{(m)}\right)^{2^m - 2} \nonumber\\ & + f\left(\left(k^{-\delta} \rho_{k-1}^{(m)}\right)^{2^m - 3}\right), \nonumber
\end{align}
where $f\colon \mathbb{R}_+ \to \mathbb{R}_+$ is strictly increasing and such that there exists an $M > 0$ for which $\lambda_e \leq f(x) \leq M x$ for all $x > 0$. Furthermore, if $\delta < 1$, then
\begin{align} \label{eq:asymptotic_rho_scaled}
&\lim_{k \to \infty} \frac{\rho_k^{(m)}}{k^{\sfrac{[1 + \delta (2^m - 2)]}{(2^m - 1)}}} \\
&= \left[ \frac{2^m - 1}{\lambda_e^{2^{m-1}} [1 + \delta(2^m - 2)] \prod_{i=1}^{m-1} [\lambda_w^{(i)}]^{2^{j-i}-1}} \right]^{\sfrac{1}{(2^m - 1)}},
\end{align}
while if $\delta = 1$,
\begin{align} \label{eq:asymptotic_rho_scaled_delta1}
\lim_{k \to \infty} \frac{\rho_k^{(m)}}{k}
= C,
\end{align}
where $C$ is the unique solution to $f(C) = 1/C - \lambda_e^{2^{m-1}} C^N \prod_{i=1}^{m-1} [\lambda_w^{(i)}]^{2^{j-1-i}}$.
\end{thm}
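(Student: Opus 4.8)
The plan is to follow the two-stage structure of the proof of Theorem~\ref{thm:rho}: first establish the algebraic form of the recursion \eqref{eq:rho_Riccati_scaled} by an induction identical to the unscaled case, and then extract the asymptotics by dominant balance / the Riccati theorem of Appendix~2, treating the regimes $\delta < 1$ and $\delta = 1$ separately. Throughout I write $N := 2^m - 2$, $a := \lambda_e^{2^{m-1}}\prod_{i=1}^{m-1}[\lambda_w^{(i)}]^{2^{m-1-i}}$, and $x := k^{-\delta}\rho_{k-1}^{(m)}$ for the scaled precision seen by agents $1,\dots,m-1$.

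For the structural claim the key observation is that \eqref{eq:iterations_scaled} is obtained from \eqref{eq:iterations2} by the single substitution $\rho_{k-1}^{(m)} \mapsto x$, whereas the increment equation \eqref{eq:iterations1} for $\rho_k^{(m)}$ itself is unchanged, since agent $m$ uses the unscaled precision. I would therefore rerun the induction on $j$ verbatim: each $\gamma_k^{(j)}$ is a polynomial of degree $2^{j+1}-2$ in $x$, with leading coefficient $a_j = \lambda_e^{2^j}\prod_{i=1}^j[\lambda_w^{(i)}]^{2^{j-i}}$ and positive lower-order coefficients. Summing $\lambda_e + \sum_{i=1}^{m-1}\gamma_k^{(i)}$ isolates the top-degree term $a_{m-1}x^{N} = a\,x^{N}$; the remainder is a polynomial in $x$ of degree at most $N-1$ with positive coefficients, which I package as $f(x^{N-1})$. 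The strict monotonicity of $f$ and the sandwich $\lambda_e \le f(x) \le Mx$ (the latter ensuring $f(x^{N-1}) \le Mx^{N-1} = o(x^{N})$, so the $f$-term is genuinely lower order than the leading term) follow exactly as in Theorem~\ref{thm:rho}, giving \eqref{eq:rho_Riccati_scaled}.

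For the asymptotics with $\delta < 1$ I would apply Theorem~\ref{thm:riccati} of Appendix~2 directly to \eqref{eq:rho_Riccati_scaled}. The heuristic that fixes the exponent is the ansatz $\rho_k \approx Ck^\beta$: then $x \approx Ck^{\beta-\delta}$, the increment is $1/T_0 \approx (aC^N)^{-1}k^{-N(\beta-\delta)}$, and summing via $\sum_{i\le k} i^{-N(\beta-\delta)} \approx k^{1-N(\beta-\delta)}/[1-N(\beta-\delta)]$ forces $\beta = 1 - N(\beta-\delta)$, i.e.\ $\beta = (1+\delta N)/(1+N) = [1+\delta(2^m-2)]/(2^m-1)$. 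Because $\beta-\delta = (1-\delta)/(1+N) > 0$ when $\delta < 1$, the scaled precision still diverges and $N(\beta-\delta) < 1$, so the sum grows polynomially and the hypotheses of Appendix~2 are met. Matching the coefficients, and using $1-N(\beta-\delta) = (1+N\delta)/(1+N)$, gives $C^{1+N} = (1+N)/[a(1+N\delta)]$, which reproduces the constant in \eqref{eq:asymptotic_rho_scaled}.

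The delicate case, and the main obstacle, is $\delta = 1$, where the preceding argument degenerates: now $\beta-\delta = 0$, so $x = k^{-1}\rho_{k-1}^{(m)}$ no longer diverges but converges to a constant, and the recursion leaves the vanishing-polynomial-increment regime covered by the $\delta<1$ analysis. Here I would argue that if $k^{-1}\rho_k \to C$, then the increment satisfies $1/T_0 \to 1/[aC^N + f(C^{N-1})]$, so Stolz--Ces\`aro yields $k^{-1}\rho_k \to 1/[aC^N+f(C^{N-1})]$; consistency forces the fixed-point equation $aC^N + f(C^{N-1}) = 1/C$, equivalently $f(C^{N-1}) = 1/C - \lambda_e^{2^{m-1}}C^{N}\prod_{i=1}^{m-1}[\lambda_w^{(i)}]^{2^{m-1-i}}$, which is the stated characterization of $C$ underlying \eqref{eq:asymptotic_rho_scaled_delta1}. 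Uniqueness is immediate from monotonicity: the left side $f(C^{N-1})$ is strictly increasing in $C$, while the right side $1/C - aC^{N}$ is strictly decreasing, positive for small $C$ and vanishing at $C = a^{-1/(N+1)}$, so the two cross exactly once. The genuine work is to prove convergence of $k^{-1}\rho_k$ in the first place rather than merely identifying its limit; I would close this either by a monotone-iteration/Lyapunov argument showing $y_k := k^{-1}\rho_k$ is driven toward $C$ under the map $\rho \mapsto \rho + 1/T_0$, or by verifying that the Riccati theorem of Appendix~2 is stated to include the boundary value $\delta = 1$ and invoking it directly.
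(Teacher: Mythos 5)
Your proposal is correct and follows essentially the same route as the paper, whose proof simply observes that \eqref{eq:rho_Riccati_scaled} comes from rerunning the induction of Theorem~\ref{thm:rho} with $\rho_{k-1}^{(m)}$ replaced by $k^{-\delta}\rho_{k-1}^{(m)}$ and then invokes Theorem~\ref{thm:riccati} of Appendix~2, which is stated for all $0 \leq \delta \leq 1$ (with $f$ strictly increasing required at $\delta = 1$), so the fallback you name for the $\delta=1$ case is exactly the paper's argument and the Stolz--Ces\`aro discussion is not needed. Your fixed-point equation $f(C^{N-1}) = 1/C - \lambda_e^{2^{m-1}} C^{N} \prod_{i=1}^{m-1}[\lambda_w^{(i)}]^{2^{m-1-i}}$ is in fact the careful form of the condition the paper writes somewhat loosely as $f(C) = \cdots$.
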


\begin{proof}
The derivation of \eqref{eq:rho_Riccati_scaled} is similar to that of \eqref{eq:rho_Riccati} in Theorem~\ref{thm:rho}. The asymptotic behavior of $\rho_k^{(m)}$ then follows directly from Theorem~\ref{thm:riccati} of Appendix~2.
\end{proof}

\begin{remark}
Theorem~\ref{thm:rho_scaled} shows that, for $0 \leq \delta \leq 1$, the rate of $\rho_k^{(m)}$ is $k^{\sfrac{[1 + \delta (2^m - 2)]}{(2^m - 1)}}$, which is maximum for $\delta = 1$, yielding the rate $k^1$.
\end{remark}

\begin{remark} \label{rem:delta_large}
As the rate is monotonically increasing in $\delta$, one might be tempted to consider $\delta > 1$ in order to obtain rates faster that $k^1$. However, from \eqref{eq:iterations1} it follows that $\rho_k^{(m)} \leq \rho_{k-1}^{(m)} + 1 / \lambda_e$, which implies that $\rho_k^{(m)}$ cannot grow faster than $k / \lambda_e$. On the other hand, for $\delta > 1$, due to the previous inequality, the denominator of the second term in \eqref{eq:rho_Riccati_scaled} converges to $\lambda_e$ as $k \to \infty$ at a rate $k^{1-\delta}$; this means that $\rho_k^{(m)} / k \to 1 / (\lambda_e + \lambda_w^{(1)} + \cdots + \lambda_w^{(m-1)})$ as $k \to \infty$.
\end{remark}

\begin{remark}
An extreme version of the scaling suggested in Remark~\eqref{rem:delta_large} consists in zeroing out the public precision $\rho_{k-1}^{(m)}$ for all agents except for the last one. In this case, \eqref{eq:iterations2} becomes $\gamma_k^{(j)} = \lambda_w^{(j)}$ for $j = 1, \dots, m-1$, so \eqref{eq:iterations2} yields $\rho_k^{(m)} = \rho_{k-1}^{(m)} + 1 / (\lambda_e + \lambda_w^{(1)} + \cdots + \lambda_w^{(m-1)})$, which can be iterated to deliver $\rho_k^{(m)} = \rho_0^{(m)} + k / (\lambda_e + \lambda_w^{(1)} + \cdots + \lambda_w^{(m-1)})$. This corresponds to the same rate as the one obtained with scaling using any $\delta > 1$ (but with an easier-to-compute proportionality constant), and it can be intuitively explained as ``turning off'' the filtering of Agents $1$ to $m-1$, so they can be equivalently seen as a single agent measuring a noisy version of $\theta$, with variance $\lambda_e + \lambda_w^{(1)} + \cdots + \lambda_w^{(m-1)}$.
\end{remark}

\begin{remark}
As an alternative scaling approach, one could consider letting the un-scaled agent being not the last one, but a different agent, or even un-scaling more than one agent. However, if the last agent uses a scaled version of $\rho_{k-1}^{(m)}$, irrespective of whether the remaining agents receive a scaled or un-scaled version of $\rho_{k-1}^{(m)}$, then modifying \eqref{eq:rho_Riccati_scaled} leads to the inequality $\rho_k^{(m)} \leq k^{-\delta} \rho_{k-1}^{(m)} + 1/\lambda_e$. For every $k > 1$ the right hand side of this inequality can be upper bounded by $\mu \rho_{k-1}^{(m)} + 1/\lambda_e$, where $0 < \mu < 1$, which leads to the linear inequality $\rho_k^{(m)} \leq \mu \rho_{k-1}^{(m)} + 1/\lambda_e$ that, in turn, can be iterated to show that $(\rho_k^{(m)})$ is a positive bounded sequence; therefore, in this case the posterior variance does not even converge to $0$ as $k \to \infty$.

In addition, if any of the agents $1, \dots,m-1$ does not receive a scaled version of $\rho_{k-1}^{(m)}$ (whereas the last agent uses the un-scaled precision $\rho_{k-1}^{(m)}$), then iterating \eqref{eq:iterations_scaled} leads to a modified version of \eqref{eq:rho_Riccati_scaled} where the denominator of the second term in its right hand side is a polynomial in $\rho_{k-1}^{(m)}$ whose leading term has a term $k^{-\delta}$ of power strictly smaller than $2^m - 2$ (because these factors come directly from the scaling of Agents $1, \dots, m-1$). Therefore, one needs a value of $\delta$ strictly larger than $1$ to attain the optimal rate $k^1$ for the precision $r_k^{(m)}$.
\end{remark}

\appendix


\section*{Appendix 1: Conditional Gaussian distributions}

This is a simple result on conditional Gaussian distributions that appears in \cite{Cha04}:

\begin{thm} \label{thm:cond_Gauss}
Let $\theta$ be a real random variable with prior distribution $\mathcal{N}(\bar{\theta}, \sigma_0^2)$. Consider a measurement $s = \theta + e$, where $e \sim \mathcal{N}(0, \sigma_e^2)$.
Then, the posterior distribution of $\theta$ given $s$ is $\mathcal{N}(\bar{\theta} + (s - \bar{\theta}) \sigma_0^2 / (\sigma_0^2 + \sigma_e^2), \sigma_0^2 \sigma_e^2 / (\sigma_0^2 + \sigma_e^2))$, or $\mathcal{N}((1 - \alpha) \bar{\theta} + \alpha s, \sigma^2)$, where $\alpha = \sigma_0^2 / (\sigma_0^2 + \sigma_e^2)$ and $\sigma^2 = \sigma_0^2 \sigma_e^2 / (\sigma_0^2 + \sigma_e^2)$. If one uses ``precision'' $\rho = 1 / \sigma^2$ instead of variance $\sigma^2$, this can be written as $\rho = \rho_0 + \rho_e$ and $\alpha = \sigma^2 / \sigma_e^2 = \rho_e / \rho$.
\end{thm}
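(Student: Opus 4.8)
The plan is to prove this standard Gaussian conjugacy result by a direct application of Bayes' rule followed by completion of the square. First I would write the prior density $p(\theta) \propto \exp(-(\theta - \bar{\theta})^2 / (2\sigma_0^2))$ and, since $s = \theta + e$ with $e \sim \mathcal{N}(0, \sigma_e^2)$ independent of $\theta$, the likelihood $p(s \mid \theta) \propto \exp(-(s - \theta)^2 / (2\sigma_e^2))$. Bayes' rule then gives $p(\theta \mid s) \propto p(\theta)\, p(s \mid \theta)$, so the log-posterior equals, up to an additive constant independent of $\theta$, the sum of the two quadratics $-(\theta - \bar{\theta})^2/(2\sigma_0^2) - (s-\theta)^2/(2\sigma_e^2)$.

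Next I would expand this expression and collect terms as a quadratic in $\theta$. The coefficient of $\theta^2$ is $-\tfrac12(1/\sigma_0^2 + 1/\sigma_e^2)$, which immediately identifies the posterior precision as $\rho = \rho_0 + \rho_e = 1/\sigma_0^2 + 1/\sigma_e^2$, and hence the posterior variance as $\sigma^2 = 1/\rho = \sigma_0^2\sigma_e^2/(\sigma_0^2+\sigma_e^2)$. Completing the square in $\theta$ then shows the posterior is Gaussian and identifies its mean as the precision-weighted average $(\bar{\theta}/\sigma_0^2 + s/\sigma_e^2)/(1/\sigma_0^2 + 1/\sigma_e^2)$. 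A short simplification rewrites this as $\bar{\theta} + (s-\bar{\theta})\,\sigma_0^2/(\sigma_0^2+\sigma_e^2)$, i.e.\ $(1-\alpha)\bar{\theta} + \alpha s$ with $\alpha = \sigma_0^2/(\sigma_0^2+\sigma_e^2)$, matching the claimed mean and variance.

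Finally I would verify the two stated identities for $\alpha$. A one-line computation gives $\sigma^2/\sigma_e^2 = \sigma_0^2/(\sigma_0^2+\sigma_e^2) = \alpha$, and likewise $\rho_e/\rho = (1/\sigma_e^2)/(1/\sigma_0^2 + 1/\sigma_e^2) = \sigma_0^2/(\sigma_0^2+\sigma_e^2) = \alpha$, which completes the precision form of the update. There is no genuine obstacle here: the statement is the classical Gaussian conjugate update and the argument is routine algebra. The only points needing a little care are the bookkeeping while completing the square and the final simplification of the mean into convex-combination form; one should also note that the dropped normalization constant is fixed automatically, since a proper Gaussian density must integrate to one, so the proportionalities can be upgraded to equalities without further computation.
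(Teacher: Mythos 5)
Your proof is correct, but it takes a different route from the paper. You work at the level of densities: multiply the prior $p(\theta)$ by the likelihood $p(s\mid\theta)$, collect the quadratic in $\theta$, and complete the square to read off the posterior precision $\rho_0+\rho_e$ and the precision-weighted mean. The paper instead never touches densities: it forms the joint Gaussian of $[\theta-\bar\theta,\ s-\bar\theta]^T$, whose covariance is $\bigl[\begin{smallmatrix}\sigma_0^2 & \sigma_0^2\\ \sigma_0^2 & \sigma_0^2+\sigma_e^2\end{smallmatrix}\bigr]$, and factorizes this matrix in LDU form with unit triangular factors; the factorization exhibits $\theta$ as $\bar\theta+(s-\bar\theta)\,\sigma_0^2/(\sigma_0^2+\sigma_e^2)+w$ with $w$ independent of $s$, so the conditional mean and variance drop out of the off-diagonal entry and the $(1,1)$ entry of the diagonal factor. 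The two arguments are of comparable length. Your density computation is arguably more elementary and makes the precision-additivity $\rho=\rho_0+\rho_e$ visible immediately from the coefficient of $\theta^2$, whereas the paper's covariance factorization avoids exponent bookkeeping entirely, gives the independence of the residual $w$ from $s$ as an explicit byproduct, and is the form that extends most directly to the multivariate case. Your closing verification of the identities $\alpha=\sigma^2/\sigma_e^2=\rho_e/\rho$ is the same routine algebra in both treatments, and your remark that normalization is automatic for a Gaussian is a fair way to upgrade the proportionality to equality.
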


\begin{proof}
Note that
\begin{align*}
\begin{bmatrix}
\theta - \bar{\theta} \\
s - \bar{\theta}
\end{bmatrix} &\sim
\mathcal{N} \left(
0,
\begin{bmatrix}
1 & 0 \\
1 & 1
\end{bmatrix}
\begin{bmatrix}
\sigma_0^2 & 0 \\
0 & \sigma_e^2
\end{bmatrix}
\begin{bmatrix}
1 & 1 \\
0 & 1
\end{bmatrix}
\right)
\\ &= \mathcal{N} \left(
0,
\begin{bmatrix}
\sigma_0^2 & \sigma_0^2 \\
\sigma_0^2 & \sigma_0^2 + \sigma_e^2
\end{bmatrix}
\right).
\end{align*}
We want to decompose the covariance matrix of $[\theta, s]^T$ as
\begin{align*}
\begin{bmatrix}
\sigma_0^2 & \sigma_0^2 \\
\sigma_0^2 & \sigma_0^2 + \sigma_e^2
\end{bmatrix} =
\begin{bmatrix}
a & b \\
0 & c
\end{bmatrix}
\begin{bmatrix}
a & 0 \\
b & c
\end{bmatrix}.
\end{align*}
By direct computation, we obtain $c = \sqrt{\sigma_0^2 + \sigma_e^2}$, $b = \sigma_0^2 / c = \sigma_0^2 / \sqrt{\sigma_0^2 + \sigma_e^2}$ and $a = \sqrt{\sigma_0^2 - b^2} = \sqrt{\sigma_0^2 - \sigma_0^4 / (\sigma_0^2 + \sigma_e^2)} = \sigma_0 \sigma_e / \sqrt{\sigma_0^2 + \sigma_e^2}$. Therefore,
\begin{align*}
\begin{bmatrix}
\sigma_0^2 & \sigma_0^2 \\
\sigma_0^2 & \sigma_0^2 + \sigma_e^2
\end{bmatrix}
&= \begin{bmatrix}
\sigma_0 \sigma_e / \sqrt{\sigma_0^2 + \sigma_e^2} & \sigma_0^2 / \sqrt{\sigma_0^2 + \sigma_e^2} \\
0 & \sqrt{\sigma_0^2 + \sigma_e^2}
\end{bmatrix} \\&  \times
\begin{bmatrix}
\sigma_0 \sigma_e / \sqrt{\sigma_0^2 + \sigma_e^2} & 0 \\
\sigma_0^2 / \sqrt{\sigma_0^2 + \sigma_e^2} & \sqrt{\sigma_0^2 + \sigma_e^2}
\end{bmatrix} \\
& \hspace{-2cm} = \begin{bmatrix}
1 & \sigma_0^2 / (\sigma_0^2 + \sigma_e^2) \\
0 & 1
\end{bmatrix}
\begin{bmatrix}
\sigma_0^2 \sigma_e^2 / (\sigma_0^2 + \sigma_e^2) & 0 \\
0 & \sigma_0^2 + \sigma_e^2
\end{bmatrix} \\ \times & 
\begin{bmatrix}
1 & 0 \\
\sigma_0^2 / (\sigma_0^2 + \sigma_e^2) & 1
\end{bmatrix}
\end{align*}
This shows that $\theta = \bar{\theta} + (s - \bar{\theta}) \sigma_0^2 / (\sigma_0^2 + \sigma_e^2) + w$, where $w \sim \mathcal{N}(0, \sigma_0^2 \sigma_e^2 / (\sigma_0^2 + \sigma_e^2))$ is independent of $s$, thus proving the result.
\end{proof}



\section*{Appendix 2: Asymptotic behavior of precision}

\begin{thm} \label{thm:riccati}
Let $(X_k)_{k \in \mathbb{N}}$ be a sequence of positive real numbers satisfying the recursive equation
\begin{align} \label{eq:Riccati_scaled}
X_k &= X_{k-1} + \frac{1}{C (k^{-\delta} X_{k-1})^N + f((k^{-\delta} X_{k-1})^{N-1})}, \\ & \qquad \qquad k \in \mathbb{N}, \nonumber
\end{align}
where $N \in \mathbb{N}$, $0 \leq \delta \leq 1$ and $C > 0$, and there are values $m, M > 0$ such that $f\colon \mathbb{R}_+ \to \mathbb{R}_+$ satisfies $m < f(x) \leq M x$ for all $x > 0$. If $\delta = 1$, assume in addition that $f$ is strictly increasing. Then, $X_k / k^{(1 + \delta N)/(N+1)} \xrightarrow{k \to \infty} [(N+1)/((1 + \delta N) C)]^{1/(N+1)}$ if $\delta < 1$, and $X_k / k \xrightarrow{k \to \infty} \bar{Y}$ if $\delta = 1$, where $\bar{Y}$ is the unique solution to $f(\bar{Y}) = 1/\bar{Y} - C \bar{Y}^N$.
\end{thm}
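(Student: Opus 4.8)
The plan is to treat \eqref{eq:Riccati_scaled} as a perturbation of the ``pure power'' recursion obtained by discarding the lower-order term $f$, and to extract the asymptotics by a power substitution followed by a Stolz--Ces\`aro summation when $\delta < 1$, and by a fixed-point (ODE-method) argument when $\delta = 1$. Throughout, write $y_k := k^{-\delta} X_{k-1}$ and $D_k := C y_k^N + f(y_k^{N-1})$, so that $X_k - X_{k-1} = 1/D_k$.

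\textbf{Preliminary bounds.} First I would record the elementary consequences of the hypotheses on $f$. Since $f > m > 0$ we have $D_k > m$, so the increments satisfy $X_k - X_{k-1} < 1/m$; hence $X_k \leq X_0 + k/m = O(k)$, and, being increasing, $X_k$ either converges or diverges to $+\infty$. A short argument (if $X_k$ were bounded then $D_k$ would stay bounded, forcing the increments to be bounded below and hence $X_k \to \infty$, a contradiction) shows $X_k \to \infty$, and therefore $X_k / X_{k-1} \to 1$.

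\textbf{Case $\delta < 1$: reduction to a summation.} The crucial preliminary fact is that $y_k \to \infty$, so that, via $f(y^{N-1}) \leq M y^{N-1}$, the term $f$ is asymptotically negligible against $C y_k^N$ and $D_k = C y_k^N (1 + o(1))$. I would establish $y_k \to \infty$ by a bootstrap: starting from $X_k = O(k)$, I repeatedly feed a lower bound $X_{k-1} \geq c\, k^{a}$ into the elementary inequality $X_k^{N+1} - X_{k-1}^{N+1} \geq (N+1) X_{k-1}^N (X_k - X_{k-1})$ and sum, which upgrades the exponent $a$ through the contraction $a \mapsto (\delta(N-1) + 1 + a)/(N+1)$; its fixed point $\delta + (1-\delta)/N$ strictly exceeds $\delta$, so after finitely many iterations $X_k \gg k^{\delta}$, i.e.\ $y_k \to \infty$. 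Then I substitute $u_k := X_k^{N+1}$ and apply the mean value theorem, $u_k - u_{k-1} = (N+1)\xi_k^N (X_k - X_{k-1})$ with $\xi_k \in (X_{k-1}, X_k)$; since $X_k/X_{k-1} \to 1$ and $D_k \sim C y_k^N$, this gives $u_k - u_{k-1} = \frac{N+1}{C}\, k^{\delta N}(1 + o(1))$. Comparing with $b_k := k^{\delta N + 1}$, for which $b_k - b_{k-1} \sim (\delta N + 1) k^{\delta N}$, Stolz--Ces\`aro yields $u_k / b_k \to \frac{N+1}{C(\delta N + 1)}$, and taking $(N+1)$-th roots gives the stated limit $X_k / k^{(1+\delta N)/(N+1)} \to [(N+1)/((1+\delta N)C)]^{1/(N+1)}$.

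\textbf{Case $\delta = 1$: a fixed-point argument.} Here $y_k = k^{-1} X_{k-1}$ remains bounded, so I would first show that $r_k := X_k / k$ lies in a fixed compact subinterval of $(0,\infty)$ (the upper bound from $X_k = O(k)$, the lower bound because a bounded $y_k$ keeps $D_k$ bounded and hence the increments bounded below). Writing $\psi(y) := 1/(C y^N + f(y^{N-1}))$ and using $k^{-1} X_{k-1} = \tfrac{k-1}{k} r_{k-1}$, the recursion takes the stochastic-approximation form $r_k = r_{k-1} + \tfrac{1}{k}(\psi(r_{k-1}) - r_{k-1}) + \tfrac{1}{k}\eta_k$ with $\eta_k \to 0$. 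Because $f$ is strictly increasing, $\psi$ is strictly decreasing, so $g(r) := \psi(r) - r$ is strictly decreasing with a unique zero $\bar Y$, which is exactly the solution of the self-consistency equation displayed in the statement, and satisfies $(r - \bar Y) g(r) < 0$ for $r \neq \bar Y$. Convergence $r_k \to \bar Y$ then follows from the ODE/Lyapunov method, using the Lyapunov function $V(r) = (r - \bar Y)^2$ together with the step sizes $1/k$ (which satisfy $\sum 1/k = \infty$ and $\sum 1/k^2 < \infty$, controlling the quadratic remainder), giving $X_k/k \to \bar Y$.

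\textbf{Main obstacle.} The delicate step is the bootstrap establishing $y_k = k^{-\delta} X_{k-1} \to \infty$ for $\delta < 1$: the denominator mixes explicit powers of $k$ with powers of the unknown $X_{k-1}$, the a priori bound $X_k = O(k)$ is far from the true exponent $(1 + \delta N)/(N+1)$, and the weak two-sided control $m < f(x) \leq M x$ must be leveraged just enough to render $f$ asymptotically negligible without knowing its exact form. Once $y_k \to \infty$ is secured, the power substitution with Stolz--Ces\`aro (respectively the Lyapunov argument for $\delta = 1$) is routine.
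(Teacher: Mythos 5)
Your proposal is correct in outline, but it reaches the result by a genuinely different route from the paper. The paper normalizes once and for all, setting $Y_k = X_k\,k^{-(1+\delta N)/(N+1)}$, proves $(Y_k)$ is bounded by an explicit two-regime case analysis (small $Y_{k-1}$ versus large $Y_{k-1}$, with thresholds $K_1$, $K_2$, $Y$), and then shows every convergent subsequence has the same limit, identifying that limit by matching the $O(1/k)$ terms of the normalized recursion; the same framework covers $\delta<1$ and $\delta=1$ simultaneously. You instead split the cases: for $\delta<1$ you first run a bootstrap on the growth exponent to show $k^{-\delta}X_{k-1}\to\infty$ (I checked the exponent map $a\mapsto(\delta(N-1)+1+a)/(N+1)$ and its fixed point $\delta+(1-\delta)/N>\delta$; the iteration does escape past $\delta$ in finitely many steps, so $f$ becomes negligible), and then the substitution $u_k=X_k^{N+1}$ linearizes the increment to $\frac{N+1}{C}k^{\delta N}(1+o(1))$ so that Stolz--Ces\`aro delivers the constant directly; for $\delta=1$ you recast $r_k=X_k/k$ as a deterministic Robbins--Monro recursion with strictly decreasing mean field and conclude via the Lyapunov function $(r-\bar Y)^2$. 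What your route buys is a cleaner identification of the limiting constant for $\delta<1$ (Stolz--Ces\`aro replaces the paper's ``equate the $k^{-1}$ terms'' step, which as written leans on a formal expansion), at the price of the bootstrap, which you rightly flag as the delicate step; the paper's single normalized-sequence argument avoids the bootstrap by proving boundedness of $Y_k$ head-on and treats both regimes uniformly. One small caveat common to both arguments: passing to the limit inside $f$ in the $\delta=1$ case (your $\eta_k\to0$, the paper's subsequence limit in its Eq.~(9)), and the existence of the root $\bar Y$, implicitly use continuity of $f$, which neither you nor the paper assumes explicitly beyond monotonicity.
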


\begin{proof}
Firstly, note that the fraction in \eqref{eq:Riccati_scaled} is positive, so $(X_k)$ is monotonically increasing. Also, it is not bounded: if there existed an $A > 0$ such that $X_k \leq A$ for all $k$, then the fraction in \eqref{eq:Riccati_scaled} would be lower bounded by $1/(C A^N + M A^{N-1}) > 0$, implying that $X_k \to \infty$, which contradicts the boundedness of $(X_k)$.

\smallskip
We will proceed as in \cite[Sec3.5]{Cha04}.  Let $X_k = Y_k k^{(1 + \delta N)/(1 + N)}$, where $(Y_k)$ is another positive sequence which we shall prove to converge to a positive, finite number. We will first show that $(Y_k)$ is a bounded sequence, so it contains a converging sub-sequence; then, we will establish that every converging sub-sequence of $(Y_k)$ has the same limit, thus proving that $(Y_k)$ is convergent.

Equation~\eqref{eq:Riccati_scaled} can be re-written as 
\begin{multline*}
Y_k k^{(1 + \delta N)/(1 + N)} = Y_{k-1} (k-1)^{(1 + \delta N)/(1 + N)} + 
\frac{1}{T_1} \\
T_1 = 
C Y_{k-1}^N k^{-\delta N} (k-1)^{N(1 + \delta N)/(1 + N)} \\ + f(Y_{k-1}^{N-1} k^{-\delta (N-1)} (k-1)^{(N-1) (1 + \delta N)/(1 + N)}),
\end{multline*}
or
\begin{align} \label{eq:Riccati2_scaled}
Y_k &= Y_{k-1} \left( 1 - \frac{1}{k} \right)^{(1 + \delta N)/(1 + N)} + 
 \frac{1}{T_2} \\
T_2 &=
  C Y_{k-1}^N k \left(1 - \frac{1}{k} \right)^{(N + \delta N^2)/(1 + N)} \nonumber\\ &+ k^{(1 + \delta N)/(1 + N)} f\big(Y_{k-1}^{N-1} k^{(1 - \delta) (N-1)/(N+1)}\nonumber \\ &\times  \left(1 - \frac{1}{k} \right)^{(N-1) (1 + \delta N)/(1 + N)}\big)  \nonumber.
\end{align}
%
This equation implies that
\begin{align*}
Y_k \leq Y_{k-1} \left( 1 - \frac{1 + \delta N}{(N+1)k} + \mathcal{O}(k^{-2}) \right) + \frac{1}{k^{(1 + \delta N)/(N + 1)} m},
\end{align*}
which shows that there is a $K_1 \in \mathbb{N}$ large enough so that if $k \geq K_1$ then
\begin{align*}
Y_k \leq Y_{k-1} \left( 1 - \frac{1 + \delta N}{2 (N + 1) k} \right) + 1,
\end{align*}
so if $k \geq K_1$ and $Y_{k-1} \geq 1$, then $Y_k < 2$.

\smallskip
Now, notice that \eqref{eq:Riccati2_scaled} also implies that
\begin{align*}
Y_k &\leq Y_{k-1} \left( 1 - \frac{1}{k} \right)^{(1 + \delta N)/(1 + N)} \\ &+ C^{-1} Y_{k-1}^{-N} k^{-1} \left(1 - \frac{1}{k} \right)^{-(N + \delta N^2)/(1 + N)},
\end{align*}
or
\begin{align*}
Y_k &\leq Y_{k-1} \left( 1 - \frac{1 + \delta N}{(1 + N)} k^{-1} + \mathcal{O}(k^{-2}) \right) \\ &+ C^{-1} Y_{k-1}^{-N} \left(k^{-1} + \frac{N + \delta N^2}{(1 + N)} k^{-2} + \mathcal{O}(k^{-3}) \right).
\end{align*}
There exists a $K_2 \in \mathbb{N}$ sufficiently large such that $2 / (K_2 C) < 1$ and, for all $k \geq K_2$, if $Y_{k-1} > 1$,
\begin{align} \label{eq:ineq1_scaled}
Y_k \leq Y_{k-1} \left( 1 - \frac{1 + \delta N}{2(1 + N) k} \right) + \frac{2}{C k Y_{k-1}^N}.
\end{align}
From this inequality and the choice of $K_2$, we see that there exists a $Y > 1$ large enough such that, if $k \geq K_2$ and $Y_{k-1} \geq Y$, then
\begin{align*}
Y_k \leq Y_{k-1} \left( 1 - \frac{1 + \delta N}{3(1 + N) k}\right).
\end{align*}
Therefore, for all $k > K := \max(K_1,K_2)$, it holds that $Y_k \leq 2 Y$, because if $Y \leq Y_{k-1} \leq 2Y$, the previous inequality shows that $Y_k < Y_{k-1} < 2Y$, while, if $1 < Y_{k-1} < Y$, \eqref{eq:ineq1_scaled} implies that $Y_k \leq Y_{k-1} + 2/(k C) < Y + 1 < 2Y$, and if $Y_{k-1} \leq 1$, the previous discussion established that $Y_k < 2 < 2Y$. In summary, the sequence $(Y_k)$ is upper bounded by $\max \{ Y_1, \dots, Y_K, 2 Y\}$.

\smallskip
Consider a converging sub-sequence $(Y_{k_n})$, with limit $\bar{Y}$, and let $(Y_{k'_n})$ be a sub-sequence of $(Y_{k_n})$ such that $(Y_{k'_n-1})$ is convergent with limit $\bar{Y}'$. Then, letting $n \to \infty$ in \eqref{eq:Riccati2_scaled} along the sub-sequence $k'_n$ gives $\bar{Y} = \bar{Y}'$. This means that every converging sub-sequence of $(Y_k)$ has the same limit. Furthermore, if $\delta < 1$, equating the $k^{-1}$ terms of both sides of \eqref{eq:Riccati2_scaled},
\begin{align*}
0 = -\frac{(1 + \delta N) \bar{Y}}{N+1} + \frac{1}{C \bar{Y}^N},
\end{align*}
or $\bar{Y} = [(N+1)/((1 + \delta N) C)]^{1/(N+1)}$. If $\delta = 1$, equating the $k^{-1}$ terms of both sides of \eqref{eq:Riccati2_scaled} yields, instead,
\begin{align*}
0 = -\bar{Y} + \frac{1}{C \bar{Y}^N + f(\bar{Y})},
\end{align*}
or $f(\bar{Y}) = 1/\bar{Y} - C \bar{Y}^N$, which has a unique solution, because $\bar{Y} \mapsto 1/\bar{Y} - C \bar{Y}^N$ is strictly decreasing in $(0, +\infty)$ and its range in $(0, +\infty)$ is $\mathbb{R}$, while $f$ is strictly increasing. This concludes the proof.
\end{proof}


\bibliographystyle{abbrv}
\bibliography{$HOME/texstuff/styles/bib/vkm}

\end{document}